\newcommand{\adnan}[1]{{\color{red}#1}}
\newcommand{\haiying}[1]{{#1}}
\newcommand\shrink[1]{}
\def\n(#1){\bar{#1}}
\def\pr{{\it Pr}}
\def\A{{\bf A}}
\def\E{{\bf E}}
\def\e{{\bf e}}
\def\U{{\bf U}}
\def\u{{\bf u}}
\def\V{{\bf V}}
\def\v{{\bf v}}
\def\W{{\bf W}}
\def\w{{\bf w}}
\def\X{{\bf X}}
\def\x{{\bf x}}
\def\Y{{\bf Y}}
\def\y{{\bf y}}
\def\cond{\,|\,}
\def\AC{\mathcal{AC}}
\def\bTheta{\boldsymbol{\Theta}}
\def\eql(#1,#2){{#1\!\!=\!#2}}
\def\NP{\text{NP}}
\def\PP{\text{PP}}
\def\child{\text{ch}}
\def\defeq{\triangleq}
\newtheorem{theorem}{Theorem}
\newtheorem{definition}{Definition}
\newtheorem{proposition}{Proposition}
\DeclareMathOperator{\vars}{vars}
\DeclareMathOperator{\val}{val}
\DeclareMathOperator{\map}{MAP}
\DeclareMathOperator{\rmap}{RMAP}
\DeclareMathOperator{\dvar}{dvar}
\def\eql(#1,#2){{#1\!=\!#2}}
\newcommand{\argmax}{\operatornamewithlimits{arg max}}
\newcommand{\one}[1]{\mathbbm{1}_{#1}}
\def\clap#1{\hbox to 0pt{\hss#1\hss}}
\begin{document}
\title{Causal Unit Selection using Tractable Arithmetic Circuits}
\author{Haiying Huang, Adnan Darwiche\\
Computer Science Department\\
University of California, Los Angeles\\
\{hhaiying, darwiche\}@cs.ucla.edu\\
}
\maketitle

\begin{abstract}
The unit selection problem aims to find objects, called units, that optimize a causal objective function which describes the objects' behavior in a causal context (e.g., selecting customers who are about to churn but would most likely change their mind if encouraged). 
While early studies focused mainly on bounding a specific class of counterfactual objective functions using data, more recent work allows one to find optimal units exactly by reducing the causal objective to a classical objective on a meta-model, and then applying a variant of the classical Variable Elimination (VE) algorithm to the meta-model---assuming a fully specified causal model is available. In practice, however, finding optimal units using this approach can be very expensive because the used VE algorithm must be exponential in the constrained treewidth of the meta-model, which is larger and denser than the original model. We address this computational challenge by introducing a new approach for unit selection that is not necessarily limited by the constrained treewidth. This is done through compiling the meta-model into a special class of tractable arithmetic circuits that allows the computation of optimal units in time linear in the circuit size.
We finally present empirical results on random causal models that show order-of-magnitude speedups based on the proposed method for solving unit selection.
\end{abstract}

\section{Introduction}
A theory of causality has emerged over the last few decades based on two parallel hierarchies, 
an {\em information hierarchy} and a {\em reasoning hierarchy,} often called the {\em causal hierarchy}~\cite{pearl18,Bareinboim20211OP}. 
On the reasoning side, this theory has crystallized three levels of reasoning with increased sophistication and proximity 
to human reasoning: associational, interventional and counterfactual, which are exemplified by the following canonical probabilities.
{\em Associational} \(\pr(y | x)\): probability of \(y\) given that \(x\) was observed (e.g., probability that a patient has a flu given they have a fever).
{\em Interventional} \(\pr(y_x)\): probability of \(y\) given 
that \(x\) was established by an intervention,
which is different from \(\pr(y | x)\) (e.g.,
seeing the barometer 
fall tells us about the weather but moving the barometer needle won't bring rain).
{\em Counterfactual} \(\pr(y_x | y', x')\): probability of \(y\) if we were to establish
\(x\) by an intervention given that neither \(x\) nor \(y\) are true (e.g., probability that a patient who
did not take a vaccine and died would have recovered had they been vaccinated).
On the information side, these forms of reasoning 
require different
levels of knowledge, encoded as associational, causal and functional (mechanistic) models, 
with each class of models containing more information than the preceding one. In the framework of probabilistic graphical models
\cite{PGMbook}, such knowledge is encoded 
by Bayesian networks~\cite{Pearl88b,DarwicheBook09}, 
causal Bayesian networks~\cite{pearl00b,PetersBook,SpirtesBook} and functional Bayesian networks~\cite{uai/BalkeP95}
also called {\em structural causal models} (SCMs)~\cite{halpern2000axiomatizing}.

The \textit{unit selection problem} emerged from this theory of causality and has been receiving attention as it provides a rigorous framework to analyze and optimize the causal behavior of objects (e.g., agents). 
It was initially proposed by~\cite{ijcai/LiP19} who motivated it using the problem of selecting customers to target by an encouragement offer for renewing a subscription. 
The unit selection problem is based on a \textit{causal objective function} which mentions a special set of variables $\U$, called \textit{unit variables,} and can include any quantities from the causal hierarchy: associational, interventional, or counterfactual. The goal of this problem is to find an instantiation $\u$ of unit variables which optimizes the given objective function. Intuitively, this is meant to identify units ($\u$), which can be individuals or arbitrary objects such as policies, based on their causal behavior. 

Unit selection embeds 
two subproblems which are hard in general: {\em evaluation} and {\em optimization} of the objective function~\cite{ijcai/LiP19}. Prior work focused on the evaluation problem, which is concerned with estimating the value of a causal objective function using observational and experimental data given a non-parametric causal graph. This stems from the identifiability problem, which has been extensively studied by the causality community. Useful bounds and criteria have been derived for this purpose with respect to a specific causal objective function called the \textit{benefit function} ~\cite{ijcai/LiP19,li2022unit_ab,aaai/LiP22,corr/LiP22a,li2022unit}.

A recent work~\cite{huang2023algorithm} investigated the optimization problem: finding units that maximize the score defined by the causal objective function---assuming a fully specified SCM so as to obtain point-values for the objective function. They first show that a broad class of causal objective functions can be reduced to an associational probability on a meta-model, called the \textit{objective model.} They
also showed that optimizing this (surrogate) probability requires solving a variant of the Maximum A Posteriori (MAP) inference problem, called \textit{Reverse-MAP (R-MAP).} They further showed how to modify the classical Variable Elimination (VE) algorithm to solve R-MAP. This treatment, in principle, would allow us to find the exact optimal units. However, computing R-MAP on the objective model remains a hard task---in fact, it is $\NP^{\PP}$-complete as shown in~\cite{huang2023algorithm}. In practice, the objective model is usually much larger and denser than the original SCM. Moreover, running VE on the objective model must be exponential in its constrained treewidth~\cite{jair/ParkD04}, and can 
also be exponential in the number of unit variables depending on the form of the specific objective function. 

We address this computational challenge by introducing a new method for solving R-MAP on a model $G$ by compiling $G$ into a tractable arithmetic circuit (AC)~\cite{Darwiche03,faia/Darwiche21} and computing R-MAP on the circuit in linear time. Using ACs for inference allows us to exploit the local/parameteric structure of a model which means that we may be able to handle models that are out of reach of classical, structure-based inference algorithms like VE; see, e.g.,~\cite{Chavira.Darwiche.Ijcai.2005,ChaviraDJ06,Chavira.Darwiche.Aij.2008}. And this is indeed what our empirical evaluation shows.

\shrink{
In particular, we show that if an arithmetic circuit (AC) that represents a distribution $\pr$ satisfies certain constraints known as {\em marginal determinism}, then the circuit can be modified to compute R-MAP probabilities with respect to $\pr$ in time linear in the circuit size, similar to how variable elimination for MAP is modified to compute R-MAP.}

This paper is structured as follows. We start by a review of the unit selection problem and its reduction to R-MAP. We then provide some background on ACs and how they can be used to solve MAP in time linear in the AC size. We follow by our proposed linear-time method for solving R-MAP on ACs, and a corresponding empirical evaluation which shows orders-of-magnitude speedups compared to the state-of-the-art. We finally close with some concluding remarks.

\section{Unit Selection by Reduction to Reverse-MAP}\label{sec:unit-select} 

A causal objective function is an expression that involves associational (observational), interventional or counterfactual probabilities. We begin by formulating the selection of customers to target by an advertisement using a specific causal objective function called the benefit function. We next review the reduction from unit selection to Reverse-MAP using an objective model. We finally review the computation of Reverse-MAP using variable elimination.

\begin{figure*}[t]
    \centering
\begin{subfigure}{0.1\textwidth}
    \includegraphics[width=\textwidth]{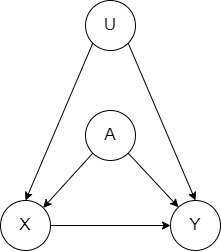}
    \caption{SCM}
    \label{fig:scm}
\end{subfigure}
\hspace{1cm} 
\begin{subfigure}[b]{0.2\textwidth}
    \includegraphics[width=\textwidth]{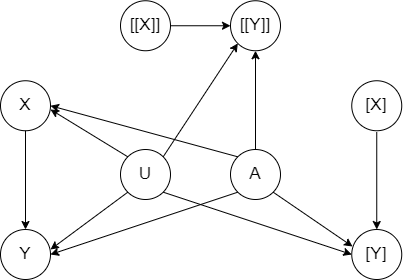}
    \caption{triplet model (mutilated)}
    \label{fig:triplet}
\end{subfigure}
\hspace{1cm} 
\begin{subfigure}[b]{0.42\textwidth}
    \includegraphics[width=\textwidth]{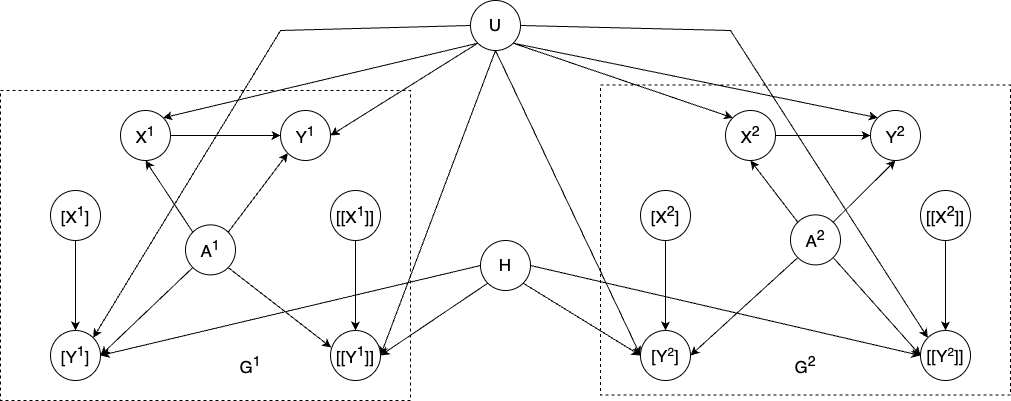}
    \caption{objective model}
    \label{fig:objective}
\end{subfigure}

\caption{Example showing the objective model for a causal objective function with 2 components (Huang and Darwiche 2023).}
\label{fig:test}
\end{figure*}

Consider a company that wants to identify customers most likely to respond positively to a pop-up ad of a new product. Suppose the product profit is \$50 and the cost of placing the ad to a user is \$10. Let variable $Y$ denote whether the user purchases the product (outcome) and variable $X$ denote whether they are targeted by the ad (action).  Let variables $\U$ be the observed customer characteristics (gender, area, purchase history, etc) that affect both $X$ and $Y,$ and variables $\A$ be the hidden factors that affect $X$ and $Y$. This causal model is shown in Figure~\ref{fig:scm}. We can partition customers into four types using counterfactuals.
A responder (\(y_x, y_{x'}'\)) purchases the product if targeted but does not otherwise, so the benefit of selecting a responder is $\beta = 50 - 10 = 40$. An always-taker (\(y_x,y_{x'}\)) purchases regardless of advertisement, so its benefit is $\gamma = -10$ (the ad cost is lost). An always-denier (\(y_x',y_{x'}'\)) does not purchase regardless of advertisement, so its benefit is $\theta = -10$. A contrarian (\(y_{x'},y_x'\)) does not purchase if targeted but purchases otherwise, so its benefit is $\delta = -50 - 10 = -60$. The company can then select customers $\u$ that maximize the  benefit function~\cite{ijcai/LiP19,aaai/LiP22,corr/LiP22a}:
\begin{align}
     L(\u) &= 
     \beta \pr(y_x,  y_{x'}'|\u) +  
     \gamma\pr(y_x,  y_{x'}|\u) \nonumber \\ 
     &\quad +\theta\pr(y_x', y_{x'}'|\u) + 
     \delta\pr(y_x', y_{x'}|\u)
     \label{eqn:benefit-ang}
\end{align}
\shrink{
Here, $(\beta, \gamma, \theta, \delta)$ is the benefit of selecting one individual of each response type and $L(\u)$.}

This setup was generalized in~\cite{huang2023algorithm} to (almost) any causal objective function that is a linear combination of counterfactual, interventional, or observational probabilities, which has the following form:
\begin{equation}
     L(\u) = \sum_{i=1}^n w_i\cdot\pr(\y^i_{\x^i},\w^i_{\v^i}|\e^i, \u) \quad 
    \label{eqn:objective-function}
\end{equation}
This class of causal objective functions includes the function in Equation~\ref{eqn:benefit-ang} but is more expressive. 
First, it allows variable sets $\Y^i, \W^i, \X^i, \V^i$  for representing compound treatments and outcomes instead of a single action (ad recommendation) and outcome (purchase). It also allows any combination of their values in each component $\pr^i$. For example, it can include components like $\pr((\bar{t}, b)_{(x,e)}, (t, \bar{b})_{(\bar{x}, e)} \cond \u)$: probability that one would have normal temperature and normal blood pressure $(\bar{t}, b)$ had they taken medicine and exercised $(x, e)$, and would have high temperature and low blood pressure $(t, \bar{b})$ had they only exercised $(\bar{x}, e)$. 
Second, it allows evidence $\E^i$ to incorporate what happens in reality. For example, the probability $\pr(y'_x \cond y, x', \u)$ that one would not have gotten Covid $(y')$ if they were to be vaccinated $(x)$, given that they did not and contracted Covid $(y, x')$. 
Third, it supports components $\pr^i$ with different treatment and outcome variables, which means that each component of the function can consider different aspects of the problem with different weights. For example, assessing the causal effect of an educational policy that applies $x$ and evaluates students' SAT score $y$ in the first year; and applies $v$ and evaluates their extracurricular activity $w$ in the second year; e.g., $L(\u) = 0.7\pr(y_{x}|\u) + 0.3\pr(w_{v}|\u)$. 
Finally, we are particularly interested in structured units (e.g., decisions, policies, people, situations,
regions, activities) that correspond to many unit variables, so that finding the optimal units poses a scalability problem. 
We next review the construction of the objective model in (Huang and Darwiche 2023) which reduces this type of objective function to a single observational probability in a principled way.

\paragraph{Objective Model} Consider an SCM $G$ (Figure~\ref{fig:scm}) with unit variables $\U$. The first step is to reduce each counterfactual component \(\pr({\y^i}_{\x^i},{\w^i}_{\v^i}|\e^i, \u)\) to an observational probability on a triplet model $G^i$ (see Figure~\ref{fig:triplet}). This is done by creating three copies of $G$, denoted by $G$, $[G]$, $[[G]]$ and joining them so that they share all \haiying{exogenous (root)} variables.\footnote{We need three copies as $G$ is for the real-world, $[G]$ is for the event ${\y^i}_{\x^i}$, and $[[G]]$ is for the event ${\w^i}_{\v^i}$.} We then remove edges pointing into treatment variables $\X^i$, $\V^i$ in copies $[G]$ and $[[G]]$ respectively. The counterfactual probability is then reduced to an observational one, $\pr^i([\y^i], [[\w^i]] \cond [\x^i], [[\v^i]], \e^i, \u),$ on $G^i$. We point out that this is based on a standard technique known as twin-networks~\cite{aaai/BalkeP94}. 

Given a list of triplet models $G^1,\ldots,G^n$, the next step is to combine $(G^i, \pr^i)$ such that the linear combination $\sum \pr^i$ is reduced to a single observational probability. As shown in~\cite{huang2023algorithm}, this can be achieved by adding an auxiliary node $H$ as a common parent of outcome nodes and constraining unit variables $\U$ among $G^i$ to take the same value: \haiying{if a variable $U\in\U$ is a root, then it is shared among $G^i$; otherwise an auxiliary node is added as a common child for copies of $U$ among $G^i$}. Node $H$ is called a mixture node as it is responsible for inducing a mixture $\sum \pr^i$.
For example, consider an objective function with two components $L(u) = w_1\cdot\pr(y_x, y'_{x'}|u) + w_2\cdot\pr(y_x, y_{x'}|u)$. 
Figure~\ref{fig:objective} shows the corresponding objective model $G'$. We can then evaluate $L$ as a classical probability $\pr'([y^1], [[{y'}^1]], [y^2], [[y^2]] \mid [x^1], [[{x'}^1]], [x^2], [[{x'}^2]],u)$ on $G'$.
The correctness of the objective model has been studied in-depth~\cite{huang2023algorithm} and is not the focus of this work. We finally highlight that the problem now amounts to optimizing a classical probability of the form $\max_{\u} \pr(\e_1|\u, \e_2)$ on the objective model, which can be larger and denser than the original SCM.

\paragraph{Reverse-MAP (R-MAP)} Consider an SCM $G$ and let $\U$, $\E_1$, $\E_2$ be disjoint sets of variables in $G$. The R-MAP problem is defined as \(\rmap(\U, \e_1, \e_2) \triangleq \argmax_{\u} \pr(\e_1|\u, \e_2)\) --- to be contrasted with the classical MAP problem, $\map(\U, \e) \triangleq \argmax_\u \pr(\u|\e) = \argmax_\u\pr(\u, \e)$\haiying{~\cite{dechter2003mini,jair/ParkD04,DarwicheBook09}}. In the context of causal reasoning, MAP can be thought of as finding the most likely outcome given a specific cause, while R-MAP is concerned with finding the most likely cause for a specific outcome (given some other information). They are different problems as discussed in~\cite{huang2023algorithm} and R-MAP captures the essence of unit selection better than MAP.

\cite{huang2023algorithm} showed how to adapt the classical variable elimination (VE) algorithm to solve R-MAP. Let $\V$ be the non-MAP variables. The basic idea is to apply two parallel runs of VE in a synced way. In the first run, we compute $\pr(\U, \e_1\e_2)$ by summing out $\V$ under evidence $\e_1\e_2$. In the second run, we compute $\pr(\U, \e_2)$ by summing out $\V$ under evidence $\e_2$. As shown in~\cite{huang2023algorithm}, we can efficiently ``divide'' $\pr(\U, \e_1\e_2)$ and $\pr(\U, \e_2)$ to obtain $\pr(\e_1|\U,\e_2)$ if the two runs use the same elimination order.\footnote{The difficulty is that the space of $\U$ can be very large.} We finally compute the R-MAP probability by maxing out variables $\U$.

\section{Basics of Arithmetic Circuits}
\label{sec:basics}
\begin{figure}[tb]
\centering
\small
\begin{tabular}[b]{cc|c}
\(A\) & \(B\) & \(f\) \\\hline
$a$ & $b$ & $3$ \\
$a$ & $\bar{b}$ & $4$ \\
$\bar{a}$ & $b$ & $10$ \\
$\bar{a}$ & $\bar{b}$ & $12$
\end{tabular}
\includegraphics[width=.50\linewidth]{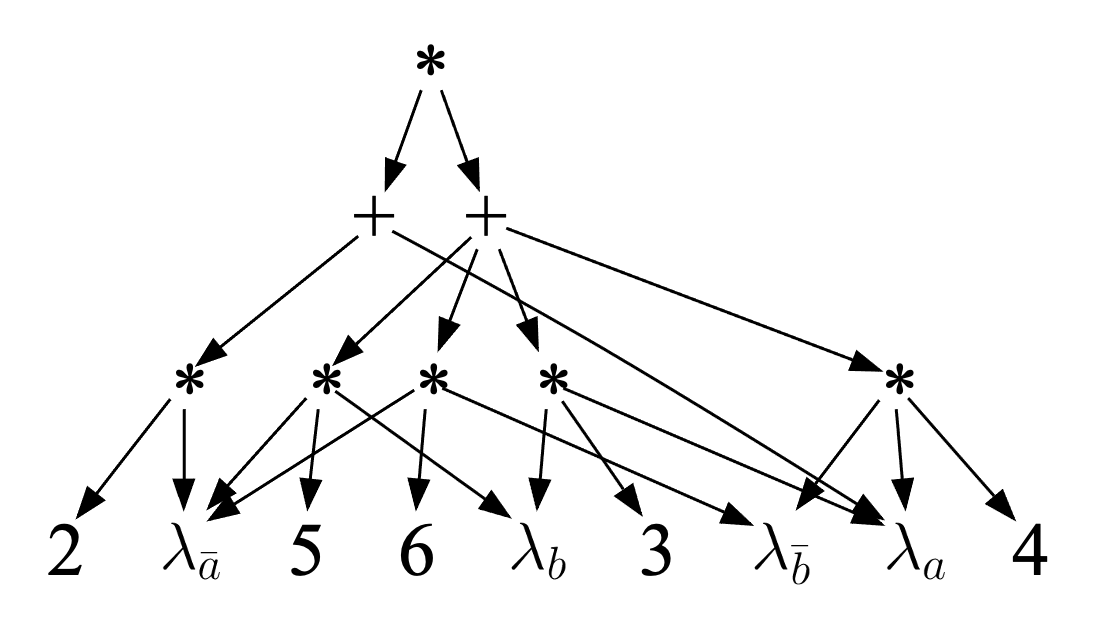}
\caption{An AC that computes factor \(f(A,B)\).}
\label{adnan:fig:AC1-def}
\end{figure}

An arithmetic circuit (AC) is based on a set of {\em discrete variables,} which define a key ingredient of 
the circuit: the {\em indicators.} For each value \(x\) of a variable \(X\), we have an indicator \(\lambda_x\). 
The AC will then have constants (parameters)
and indicators (inputs) as its leaf nodes with adders and multipliers as its internal nodes; see Figure~\ref{adnan:fig:AC1-def}.
An AC represents a factor which is a mapping from variable instantiations to non-negative numbers. 
A probability distribution is a special case of a factor so an AC can represent a distribution too, which is our focus in this work.
The factor represented by an AC is obtained by evaluating the AC at complete variable instantiations.
To evaluate the AC at an instantiation \(\e\), we replace each indicator \(\lambda_x\) 
with \(1\) if the value \(x\) is compatible with instantiation \(\e\) and with \(0\) otherwise~\cite{Darwiche03}. 
We then evaluate the AC bottom-up in the standard way.
The factor \(f(A,B)\) in Figure~\ref{adnan:fig:AC1-def} has four rows which correspond to the four instantiations of variables \(A\) and \(B\). 
Evaluating the AC in this figure at each of these complete instantiations 
yields a value for each instantiation and therefore defines its factor.
We say in this case that the AC {\em computes} this factor.
\shrink{We also say that this circuit
can {\em lookup} the values of this factor. 
For example, in Figure~\ref{adnan:fig:AC1-eval} (left), the circuit evaluates to \(12\) under the complete variable
instantiation \(\adnaneql(A,{\adnann{a}}),\adnaneql(B,{\adnann{b}})\) by setting the indicators to
\(\adnaneql({\lambda_a},0), \adnaneql({\lambda_{\adnann{a}}},1), \adnaneql({\lambda_b},0), \adnaneql({\lambda_{\adnann{b}}},1)\). 
}
An AC can be evaluated at a {\em partial} variable 
instantiation using the same procedure, but the value returned may not be meaningful unless the circuit satisfies certain properties. Three key such properties are
decomposability, determinism and smoothness~\cite{Darwiche03}.
\haiying{Let $\vars(n)$ be the subset of variables whose indicators appear at or below a node $n$.
Decomposability requires that $\vars(n_1) \cap \vars(n_2) = \emptyset$ for every two children $n_1$ and $n_2$ of a $*$-node. Smoothness requires that  $\vars(n_1) = \vars(n_2)$ for every two children $n_1$ and $n_2$ of a $+$-node.}
Determinism requires at most one non-zero 
child for each $+$-node, when the circuit is evaluated under any complete variable instantiation.
An AC that represents a probability distribution $\Pr(\X)$
will return the marginal $\Pr(\e)$ when evaluated at input $\e,$
assuming the AC is deterministic, 
decomposable and smooth~\cite{Darwiche03}. 
It will also compute the MPE probability under evidence $\e$
in this case, assuming we replace $+$-nodes with $\max$-nodes~\cite{ChanD06}. In fact, determinism is not needed for
computing marginals as shown initially in~\cite{PoonD11} and discussed in detail in~\cite{ChoiDarwiche17}. See also~\cite{faia/Darwiche21} for a recent tutorial/survey on ACs and their properties.

\section{Solving MAP using Arithmetic Circuits}\label{sec:AC}

We next discuss a special class of ACs that we shall utilize in this work: 
decision-ACs. 
\cite{huang2006solving} showed how
decision-ACs can be used to solve MAP in time linear in the circuit size, assuming they are
constructed subject to specific constraints. 
\cite{pipatsrisawat2009new} built
on these observations to show similar results with respect to a problem related to MAP.
\cite{ChoiDarwiche17} identified a general condition (determinism after projection) that these constraints ensure, which allows MAP to be solved in linear time.\shrink{\footnote{This condition was later referred to as marginal determinism in~\cite{choi2022solving}. Marginalization is the counterpart of projection: to project on a set of variables is to marginalize the complement of these variables.}}
We review these earlier findings next and formalize some of the associated observations as we need them to 
provide a basis for our treatment for R-MAP.

\shrink{
\adnan{Kisa and Choi has a paper on compiling BNs into SDDs then AC (around 2014). This is relevant to cite as well, later.}
}

\begin{definition}[Decision-AC]
    Let $\AC(\X)$ be a decomposable and smooth arithmetic circuit over variables $\X$. $\AC$ satisfies the \underline{decision property} if every $+$-node has the form $n = \sum_{i} \lambda_{x_i} * n_i$  where $x_i$ are distinct values of some variable $X \in \X$ and $n_i$ are circuit nodes. We say $X$ is the decision variable of node $n$, denoted $\dvar(n)$.
    \label{def:decision-ac}
\end{definition}

\shrink{Consider a $f(X, \Y)$ and assume $X$ is binary. It can always be decomposed as $f(X, \Y) = \one{x} * g_1(\Y) + \one{\bar{x}} * g_2(\Y)$.}
The decision property implies determinism so decision-ACs are decomposable, smooth and deterministic. Decision-ACs are the numerical analog of the Boolean decision-DNNFs~\cite{huang2007language} and have been used extensively in probabilistic reasoning~\cite{Darwiche03,huang2006solving,Chavira.Darwiche.Ijcai.2007,Chavira.Darwiche.Aij.2008,ChaviraDJ06,pipatsrisawat2009new}.\footnote{For example, the state-of-the-art {\tt ACE} system, \url{http://reasoning.cs.ucla.edu/ace}, encodes a Bayesian Network using a CNF, compiles the CNF into a decision-DNNF using the {\tt C2D}~\cite{darwiche2004new} compiler, and finally converts the decision-DNNF into a decision-AC. Compilation based on variable elimination also produces decision-ACs~\cite{DarwicheBook09}.}

\begin{proposition}
    Consider a decision-AC over variables $\X$ and let $\U \subseteq \X$. If the circuit satisfies: 1) no $+$-node $n$ with $\dvar(n) \in \U$ is below some $+$-node $m$ with $\dvar(m) \notin \U$ and 2) every indicator $\lambda_x$ is only attached to some $+$-node $n$ with $\dvar(n)=X$, then this circuit supports linear-time MAP over variables $\U$.
    \label{prop:decision-ac-constraint}
\end{proposition}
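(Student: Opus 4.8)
The plan is to give the linear-time algorithm explicitly and prove its correctness by structural induction. Fix evidence $\e$ on a set $\E \subseteq \X \setminus \U$ and let $\Z = \X \setminus \U \setminus \E$ be the variables to be summed out, so that MAP asks for $\max_{\u} \pr(\u, \e) = \max_{\u}\sum_{\z} \pr(\u, \z, \e)$ and for a maximizing $\u$. Build $\AC^\star$ from $\AC$ by replacing every $+$-node $n$ with $\dvar(n) \in \U$ by a $\max$-node and leaving every other node unchanged; then evaluate $\AC^\star$ bottom-up after setting every indicator of every variable in $\U \cup \Z$ to $1$ and setting the indicators of $\E$ so as to select $\e$; finally run one top-down pass that, at each $\max$-node, follows a child attaining the maximum and records the corresponding value of its decision variable. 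Each pass is a single traversal of the circuit, hence runs in time linear in its size, so the only thing to verify is correctness.

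For correctness I would prove, by induction from the leaves up, the invariant that for every node $n$ the value $\AC^\star_n$ it takes under the above inputs equals $\max_{\u_n}\sum_{\z_n} f_n(\u_n,\z_n,\e_n)$, where $f_n$ is the factor computed by $n$ in $\AC$, the quantifiers range over the instantiations of $\vars(n)\cap\U$ and $\vars(n)\cap\Z$, and $\e_n$ is $\e$ restricted to $\vars(n)\cap\E$. Leaves (constants and indicators) are immediate. For a $*$-node $n=n_1*n_2$, decomposability makes the variable sets of $n_1,n_2$ disjoint and $f_n = f_{n_1} f_{n_2}$; summation distributes over the product as usual, and maximization distributes over it because every factor computed by an AC is non-negative, so $\max_{a,b} g(a)h(b) = (\max_a g(a))(\max_b h(b))$, and the claim follows from the two inductive hypotheses. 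For a $+$-node $n = \sum_i \lambda_{x_i}*n_i$ with $\dvar(n)=X\in\U$, smoothness gives $\vars(n_i) = \vars(n)\setminus\{X\}$ for all $i$, and in $\AC$ the indicators make $f_n$, as a function of $X$, equal to $f_{n_i}$ at $X=x_i$; hence $\max_{\u_n}\sum_{\z_n} f_n = \max_i\bigl(\max_{\u_{n_i}}\sum_{\z_{n_i}} f_{n_i}\bigr)$, which by the inductive hypotheses is exactly the value $\max_i \AC^\star_{n_i}$ returned by the $\max$-node (each $\lambda_{x_i}$ having been set to $1$). For a $+$-node with $\dvar(n)=X\notin\U$: constraints 1 and 2 together force the subcircuit rooted at $n$ to mention no $\U$-variable (constraint 1 keeps $\U$-decision-nodes out of it, and constraint 2 then keeps $\U$-indicators out of it), so $\vars(n)\cap\U=\emptyset$ and the target carries no maximization; the node stays a $+$-node and the identity reduces to the standard fact that a smooth, decomposable circuit computes marginals, which one checks directly according to whether $X\in\E$ (one summand survives) or $X\in\Z$ (the summands are added, matching the sum over the values of $X$). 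Applying the invariant at the root, whose variable set is all of $\X$ by smoothness and whose factor is $\pr(\X)$, yields $\AC^\star_{\text{root}} = \max_{\u}\sum_{\z}\pr(\u,\z,\e) = \max_{\u}\pr(\u,\e)$, the MAP value.

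The main obstacle is the $+$-node case, and it is exactly where the two constraints are needed: replacing the $\U$-decision-nodes by $\max$-nodes must not illegitimately swap a maximization past a summation, since in general $\max_{\u}\sum_{\z} f \neq \sum_{\z}\max_{\u} f$. Constraint 1 rules this out, because no $\U$-decision-node lies below any non-$\U$ $+$-node, so in the bottom-up evaluation every maximization of $\AC^\star$ sits strictly above every summation it could influence: the circuit first forms the correct marginals over $\Z$ and only afterwards maximizes over $\U$, i.e. it computes $\max_{\u}$ of a function of $\u$ alone. Constraint 2 is what makes that lower, already-summed part genuinely independent of $\U$, so the function being maximized depends on $\u$ only through the upper decision structure. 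Finally, for the top-down recovery pass, decomposability guarantees that along any branch each $U\in\U$ is the decision variable of exactly one $\max$-node reached and that the disjoint subcircuits below a $*$-node decide disjoint subsets of $\U$; hence the values recorded at the $\max$-nodes are mutually consistent and jointly determine a full instantiation $\u$, which by the invariant attains the MAP value.
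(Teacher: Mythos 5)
Your proof is correct, but it takes a genuinely different route from the paper. The paper does not prove this proposition by direct induction; instead it factors the argument through the notion of \emph{$\U$-determinism} (marginal determinism): it first argues, via the projection construction of Choi and Darwiche (2017), that any $\U$-deterministic decomposable and smooth AC supports linear-time MAP (fix the indicators of the non-$\U$ variables under $\e$, reduce the $\U$-independent part to constants, and observe that the resulting projected circuit over $\U$ is deterministic and hence supports MPE); it then proves separately (Proposition~\ref{thm:decision-ac}) that conditions (1) and (2) force a decision-AC to be $\U$-deterministic. Your structural induction with the invariant $\AC^\star_n=\max_{\u_n}\sum_{\z_n} f_n(\u_n,\z_n,\e_n)$ bypasses determinism entirely and proves the max-sum exchange node by node; notably, your handling of the case $\dvar(n)\notin\U$ --- using condition (2) to locate a $\U$-decision node below $n$ and condition (1) to derive a contradiction, hence $\vars(n)\cap\U=\emptyset$ --- is essentially the same argument the paper uses to establish $\U$-determinism, just embedded inside the induction rather than isolated as a lemma. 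What the paper's factoring buys is reusability: the $\U$-determinism condition is exactly what is invoked again for the R-MAP result (Theorem~\ref{thm:rmap}), and it situates the proposition within the marginal-determinism literature. What your approach buys is self-containedness (no subcircuits, no projection construction) and an explicit correctness argument for the top-down argmax-recovery pass, which the paper leaves implicit. One small point to make explicit if you polish this: the distribution of $\max$ over products at $*$-nodes relies on the non-negativity of the sub-factors, i.e., on the parameters being non-negative, which is the standard assumption for ACs representing distributions.
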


\shrink{
Let $\U$ be MAP variables and $\V$ be other variables. If a decision-$\AC$ is constrained such that no $+$-node with $\dvar \in \U$ can appear below a $+$-node with $\dvar \in \V$, then we can compute the MAP probability $\map_p(\U, \e)$ by evaluating the $\AC$ bottom-up under evidence $\e$ and replacing every $+$-node whose $\dvar \in \U$ with a $\max$-node. This condition can be enforced by using a constrained variable order with $\U$ appearing in the last when constructing this decision-$\AC$.
}

If a decision-AC satisfies Conditions~(1) and (2) above, then we can compute $\map(\U, \e)$ exactly by traversing the AC bottom-up, while replacing every $+$-node $n$ with decision variable $\dvar(n)\in\U$ with a $\max$-node. This was first claimed in~\cite{huang2006solving} without a formal proof and was used in later works to solve related problems, e.g.,~\cite{pipatsrisawat2009new,choi2020probabilistic}.
\cite{huang2006solving} identified Condition~(1) of
Proposition~\ref{prop:decision-ac-constraint} but Condition~(2) was never made explicit in the literature as far as we know but is needed for the proof of Proposition~\ref{prop:decision-ac-constraint}.\footnote{{\tt ACE} compiles decision-ACs
that satisfy Condition~(2).} 
\shrink{
Since Decision-AC allows a mixture of decision nodes and conjunction nodes\footnote{This separates decision-DNNF from FBDD~\cite{huang2007language}.}, it is necessary to ensure indicators only attached to decision nodes to ensure the correctness of linear-time MAP. \haiying{For a counterexample, see appendix ???.}
}
Our proof of this proposition considers a more general condition identified in~\cite{ChoiDarwiche17} which allows linear-time MAP on ACs, and then shows that decision-ACs satisfy this condition; see Proposition~\ref{thm:decision-ac} which is stated later. 

Let $\U$ be the MAP variables and $\V$ be all other variables.
The MAP problem can be solved by first computing the marginal $\pr(\U, \e) = \sum_\V \pr(\X, \e),$ which sums-out variables $\V$ (projects on variables $\U$), and then computing the MPE, $\max_\U \pr(\U, \e)$; see, e.g.,~\cite{DarwicheBook09}. We know that a decomposable and smooth AC supports linear-time marginals
as we can simply set all indicators of variables $\V$ to $1$.
If the AC is also deterministic, then it also supports linear-time MPE. However, after projecting a deterministic, decomposable, and smooth AC on variables $\U,$ the resulting AC over variables $\U$ may no longer be deterministic. If an AC remains deterministic after being projected on $\U$, we can easily compute MAP by evaluating the circuit bottom-up while replacing every $+$-node that {\em depends on variables in $\U$} with a $\max$-node. This was first shown in~\cite{ChoiDarwiche17} 
and the determinism-after-projection property was 
later referred to as {\em marginal determinism} in ~\cite{choi2022solving}.

\begin{definition}
    Consider a decomposable and smooth AC over variables $\X$ and let $\U \subseteq \X$. The AC is \underline{$\U$-deterministic} iff the following holds: for any $+$-node $n$, if $n$ depends on $\U$ (i.e., $\vars(n) \cap \U \neq \emptyset$), then at most one child of $n$ can be non-zero when the AC is evaluated at any input $\u$.
    \label{def:mar-determinism}
\end{definition}

\shrink{
\adnan{Determinism does not imply $\U$-determinism, and vice versa; see Appendix~?? for a concrete example that illustrates this.}
}

\shrink{To distinguish $\U$-determinism from the general determinism defined in Section \ref{sec:AC}, we may sometimes refer to determinism as total determinism when necessary.}

If an AC is $\U$-deterministic, then it can be used to compute the MAP probability $\map(\U, \e)$ under any evidence $\e$ by performing a simple, bottom-up traversal as below:
{\footnotesize
\[
\val(n) =
\begin{cases}
1, & \text{if $n=\lambda_x$ \& $x\land\e$ is consistent} \\
0, & \text{if $n=\lambda_x$ \& $x\land\e$ is inconsistent} \\
p, & \text{if $n$ is a parameter $p$} \\
\prod_{c \in \child(n)} \val(c) & \text{if $n$ is a product node} \\
\max_{c \in \child(n)} \val(c) & \text{if $n$ is a sum node} \\
                               & \text{and $\vars(n) \cap \U = \emptyset$} \\
\sum_{c \in \child(n)} \val(c), & \text{if $n$ is a sum node} \\
                                & \text{and $\vars(n) \cap \U \neq \emptyset$}
\end{cases}
\]
}

The correctness of this procedure is established as follows. Given decomposability and smoothness, by fixing the indicators of variables $\V$ to 1, we obtain another decomposable and smooth AC that computes the projection $\pr(\U) = \sum_\V \pr(\U, \V)$. We can then reduce nodes that do not depend on $\U$ to constants (parameters). This leads to a projected AC that depends only on $\U$, $\AC_p(\U)$.
If $\AC(\X)$ is $\U$-deterministic, then $\AC_p(\U)$ must be deterministic, so it can compute the
MPE probability $\max_\U\pr(\U)$ in linear time after replacing its $+$-nodes with $\max$-nodes.

We now have the following result which immediately implies
Proposition~\ref{prop:decision-ac-constraint} and, hence, shows that decision-ACs with appropriate constraints support linear-time MAP.

\shrink{
\begin{proposition}
    Let $\AC$ be a decomposable and smooth circuit over variables $\X$ and let $\U \subseteq \X$. If $\AC$ is $\U$-deterministic, then it supports MAP with target variables $\U$ in linear time in the circuit size.
    \label{thm:map}
\end{proposition}
}

\begin{proposition}
A decision-AC is $\U$-deterministic if it
satisfies the two conditions of Proposition~\ref{prop:decision-ac-constraint}.
    \label{thm:decision-ac}
\end{proposition}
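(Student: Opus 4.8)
The plan is to show that every $+$-node $n$ of a decision-AC that depends on $\U$ has the form $n = \sum_i \lambda_{x_i} * n_i$ where $X = \dvar(n) \in \U$, and then argue that under any input $\u$ at most one of the indicators $\lambda_{x_i}$ is non-zero, which forces at most one child of $n$ to be non-zero. First I would take an arbitrary $+$-node $n$ with $\vars(n) \cap \U \neq \emptyset$ and let $Y = \dvar(n)$, so $n = \sum_i \lambda_{y_i} * n_i$ by the decision property. The first claim is that $Y \in \U$. Suppose not; then $n$ is a $+$-node with $\dvar(n) \notin \U$. Since $n$ depends on some variable $U \in \U$, there is an indicator $\lambda_u$ appearing at or below $n$. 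By Condition~(2), $\lambda_u$ is attached only to $+$-nodes $m$ with $\dvar(m) = U \in \U$, so such an $m$ occurs at or below $n$. If $m = n$ we contradict $\dvar(n) \notin \U$; if $m$ is strictly below $n$ we contradict Condition~(1), which forbids a $+$-node with $\dvar \in \U$ from lying below a $+$-node with $\dvar \notin \U$. Hence $Y \in \U$.

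Next, given that $n = \sum_i \lambda_{y_i} * n_i$ with the $y_i$ distinct values of $Y \in \U$, I would evaluate at any input $\u$. Since $Y \in \U$, the instantiation $\u$ fixes a value of $Y$, so exactly one indicator $\lambda_{y_i}$ is set to $1$ and the rest to $0$; therefore at most one summand $\lambda_{y_i} * n_i$ is non-zero, giving at most one non-zero child of $n$. This is precisely the $\U$-determinism condition of Definition~\ref{def:mar-determinism}, so the decision-AC is $\U$-deterministic.

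The subtle point — and the main obstacle — is the use of Condition~(2) in the first claim. Without it, a $+$-node $n$ could depend on a variable $U \in \U$ not because some $U$-indicator sits at a $U$-decision node below $n$, but because a stray indicator $\lambda_u$ is attached to a node whose decision variable is something else; then $\dvar(n)$ need not be in $\U$ even though $\vars(n) \cap \U \neq \emptyset$, and the single-non-zero-child argument collapses. Condition~(2) rules this out by guaranteeing that any occurrence of a $\U$-indicator is witnessed by a $\U$-decision node, which then — via Condition~(1) — cannot be buried under a non-$\U$ decision node. I would make sure to state carefully that "$n$ depends on $U$" means $\lambda_u$ or $\lambda_{\bar u}$ (some value-indicator of $U$) appears in the subcircuit rooted at $n$, since that is exactly the hypothesis $\vars(n)\cap\U\neq\emptyset$ unfolds to, and that smoothness/decomposability are what let us speak of $\vars(n)$ coherently. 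A small remaining detail is handling the base and inductive structure only implicitly: the argument above is purely local to each $+$-node, so no induction on circuit depth is actually needed once Conditions~(1) and~(2) are in hand.
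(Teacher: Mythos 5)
Your proof is correct and takes essentially the same approach as the paper's: you establish (via the same contradiction argument using Conditions~(1) and~(2)) that any $+$-node depending on $\U$ must have its decision variable in $\U$, and then observe that the indicator structure of a decision node forces at most one non-zero child under any input $\u$. The only cosmetic difference is that you argue the contrapositive of the paper's claim and explicitly handle the case where the witnessing decision node coincides with $n$ itself, which the paper glosses over.
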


\begin{proof}
Suppose the decision-AC is evaluated at input $\u^*$
which is some instantiation of $\U.$
\shrink{
    Let $\u^*$ be any instantiation of $\U$. Suppose the decision-AC is evaluated at inputs $\x \sim \u^*$.} Consider any $+$-node $n$ with decision variable $X$. If $X \in \U$, it is easy to verify that at most one child of $n$ can be nonzero---the one with indicator $\lambda_{x^*}$, where $x^*$ is the state of $X$ in $\u^*$. If $X \notin \U$, we claim that node $n$ cannot depend on $\U$ (i.e., $\vars(n) \cap \U = \emptyset$). Hence, this decision-AC is $\U$-deterministic by Definition~\ref{def:mar-determinism}. We prove this claim by contradiction. Suppose node $n$ does depend on some variable $U \in \U$, which means there is an indicator $\lambda_u$ below $n$. By Condition~(2) of Proposition~\ref{prop:decision-ac-constraint}, this indicator $\lambda_u$ must be attached to some $+$-nodes $m$ with decision variable $U$. This implies that $m$ is below $n$ which contradicts Condition~(1) of Proposition~\ref{prop:decision-ac-constraint}.
\end{proof}

\section{Circuits for Reverse-MAP}
We now consider the main question behind our proposed method for unit selection: under what condition, and why, will a circuit attain the ability to support efficient R-MAP? The answer is motivated by the following observation.

\shrink{
\paragraph{MAP} We start by discussing the condition when a circuit supports efficient computation of classical MAP. Suppose $\U$ are MAP variables and $\V$ are the other variables.
If an AC satisfies the constraint that after setting inputs $\V$ to 1, this circuit remains deterministic over inputs $\U$ (after some pruning), then we can easily compute the MAP probability $\map_p(\U, \e)$ by evaluating the circuit bottom-up under evidence $\e$ and replacing every $+$-node {\em containing variables in $\U$} with a $\max$-node. This was
first identified in \cite{huang2006solving,pipatsrisawat2009new,ChoiDarwiche17} and later referred to as {\em marginal determinism} in \cite{choi2022solving}.

\begin{definition}
    Let $\AC$ be a decomposable and smooth AC over variables $\X$ and let $\U \subseteq \X$. We say that $\AC$ is \underline{$\U$-deterministic} iff the following holds: for any $+$-node $n$, if $n$ depends on $\U$ (i.e., $\vars(n) \cap \U \neq \emptyset$), then at most one child of $n$ can be non-zero when $\AC$ is evaluated at inputs compatible with any instantiation $\u \in \val(\U)$.
    \label{def:mar-determinism}
\end{definition}

\adnan{Determinism does not imply $\U$-determinism, and vice versa; see Appendix~?? for a concrete example that illustrates this.}

\shrink{To distinguish $\U$-determinism from the general determinism defined in Section \ref{sec:AC}, we may sometimes refer to determinism as total determinism when necessary.}

If an AC is $\U$-deterministic, then it can be used to compute the MAP probability $\map_p(\U, \e)$ under any evidence $\e$ by performing a simple, bottom-up traversal as below:

\[
\val(n) =
\begin{cases}
\one{x \sim \e}, & \text{if $n$ is an indicator $\lambda_x$} \\
p, & \text{if $n$ is a parameter \adnan{$p$}} \\
\prod_{c \in \child(n)} \val(c) & \text{if $n$ is a product node} \\
\max_{c \in \child(n)} \val(c) & \text{if $n$ is a sum node} \\
                               & \text{and $\vars(n) \cap \U = \emptyset$} \\
\sum_{c \in \child(n)} \val(c), & \text{if $n$ is a sum node} \\
                                & \text{and $\vars(n) \cap \U \neq \emptyset$}
\end{cases}
\]

The correctness of this procedure can be established by viewing MAP as performing MPE on a projection to MAP variables \cite{ChoiDarwiche17}. Given decomposability and smoothness, by fixing a subset of inputs $\Y$ to 1, we can obtain another decomposable and smooth circuit that computes the projection $\pr(\U) = \sum_\Y \pr(\U, \Y)$. We can prune nodes that do not depend on $\U$ to be a simple parameter. This leads to a sub-DAG\footnote{The term ``subcircuit" has been overloaded.} of $\AC$ that only depend on $\U$, called a marginal circuit $\AC_\U(\U)$.
If $\AC(\X)$ is $\U$-deterministic, then $\AC_\U(\U)$ must be deterministic, so it can always produce $\max_\U\pr(\U)$ (the MPE probability) after replacing its $+$-node with $\max$-node.
}

\paragraph{Circuit Division} The primitive operation required by R-MAP, beyond the existing ones for classical MAP, is the ability to divide two distributions that have the same domain. 
That is, 
to obtain $\pr(\e_1|\U, \e_2)$, we need to divide $\pr(\U, \e_1\e_2)$ and $\pr(\U, \e_2)$ which is generally hard. Hence, we raise the question: given two distributions $\pr_1(\X)$ and $\pr_2(\X)$ computed using ACs, can we efficiently obtain an AC that computes their quotient $\pr_3(\X) = \pr_1(\X) / \pr_2(\X)$? We show that this is feasible if $\pr_1(\X)$ and $\pr_2(\X)$ are computed by two ACs with the same structure (but with different parametrizations), assuming the ACs are deterministic, decomposable and smooth.

\begin{theorem}
Consider an AC that is 
deterministic, decomposable and smooth under both parametrization $\bTheta_1$ and $\bTheta_2$.
Suppose further the AC
computes distribution $\pr_1(\X)$ under $\bTheta_1$ and distribution $\pr_2(\X)$ under $\bTheta_2$. Then
the AC is deterministic, decomposable and smooth
and 
computes $\pr_3(\X) = \pr_1(\X) / \pr_2(\X)$
under parametrization
$\bTheta_3 = \{ \theta_1 / \theta_2: \theta_1,\theta_2 \text{ are corresponding parameters in } \bTheta_1,\bTheta_2\}$.\footnote{We assume $\pr_2$ has larger support than $\pr_1$, i.e., $\pr_2(\x) = 0$ only if $\pr_1(\x) = 0$. We define $0/0 \defeq 0$.}
    \label{thm:circuit-div}
\end{theorem}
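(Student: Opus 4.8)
The plan is to prove, by a single bottom-up structural induction over the circuit, one invariant: for every node $n$ and every complete instantiation $\x$, $\val_3(n)=\val_1(n)/\val_2(n)$, where $\val_i(n)$ denotes the value of $n$ at $\x$ under $\bTheta_i$ and $0/0\defeq 0$. Everything the theorem asks for then follows. Decomposability and smoothness depend only on the underlying DAG and on the sets $\vars(n)$, not on the numbers at parameter leaves, so they hold under $\bTheta_3$ with no argument. Once the invariant is established it shows $\val_3(n)=0$ exactly when $\val_1(n)=0$, so at each $+$-node the number of nonzero children under $\bTheta_3$ equals the number under $\bTheta_1$, namely at most one --- determinism under $\bTheta_3$. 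And evaluated at the root the invariant reads $\val_3(r)=\pr_1(\x)/\pr_2(\x)=\pr_3(\x)$ for every $\x$, i.e., the circuit computes $\pr_3$ under $\bTheta_3$. So the only real content is the invariant.

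The leaves and $*$-nodes are routine. An indicator $\lambda_x$ evaluates to $\one{x\sim\x}\in\{0,1\}$ regardless of parametrization, so $\val_3=\val_1=\val_2$ and the invariant holds (using $1/1=1$ and $0/0=0$); a parameter leaf satisfies the invariant by the very definition of $\bTheta_3$; and for a $*$-node $n=\prod_j c_j$ one multiplies the children's inductive hypotheses and uses that division is multiplicative, $\prod_j(\val_1(c_j)/\val_2(c_j))=(\prod_j\val_1(c_j))/(\prod_j\val_2(c_j))$, the $0/0$ convention covering the case in which some child value vanishes.

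The substantive case --- and the step I expect to be the main obstacle --- is a $+$-node $n=\sum_j c_j$, where determinism must make division commute with a sum. Determinism under $\bTheta_1$ gives a unique child $c_k$ with $\val_1(c_k)\neq 0$, so $\val_1(n)=\val_1(c_k)$ (or $\val_1(n)=0$ when every child vanishes), and similarly determinism under $\bTheta_2$ gives a unique nonzero child. To conclude one needs these to be the same child: then, by the inductive hypotheses, $\val_3(n)=\sum_j\val_1(c_j)/\val_2(c_j)=\val_1(c_k)/\val_2(c_k)=\val_1(n)/\val_2(n)$, every other summand being $0/\val_2(c_j)=0$, and the all-vanishing case is immediate. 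One also needs that no summand ever has the form $\text{(positive)}/0$. Both requirements follow from a support-propagation lemma, \emph{for every node $n$ and every $\x$, $\val_1(n)>0$ implies $\val_2(n)>0$}: given it, the child $c_k$ that is nonzero under $\bTheta_1$ also has $\val_2(c_k)>0$, hence is the unique child nonzero under $\bTheta_2$, and no $\text{(positive)}/0$ can arise anywhere in the induction. This lemma is exactly where the standing hypothesis that $\pr_2$ has support at least that of $\pr_1$ gets used.

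Proving the lemma is the heart of the matter, since the support hypothesis is given only at the root and must be pushed down the circuit. The mechanism is that in a deterministic, decomposable, smooth circuit the sub-circuit activated by a complete instantiation $\x$ is essentially determined by $\x$ --- all children of a $*$-node are active, and at a $+$-node the unique nonzero child is, for a decision-AC, forced by the indicator on the branch --- so the support of the factor computed at a node composes upward into the support of the root (a set product of the children's supports across a $*$-node, a disjoint union across a $+$-node). Thus a node $n$ with $\val_1(n,\x_n)>0$ but $\val_2(n,\x_n)=0$ could be extended along an activated path to a complete $\x$ witnessing $\pr_1(\x)>0=\pr_2(\x)$, contradicting the hypothesis. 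The delicate point inside this is sub-circuits that are identically zero under $\bTheta_1$: such a node may sit beneath a $*$-node that vanishes for an unrelated reason, so the activated path must be traced from a genuinely live ancestor, or these sub-circuits removed beforehand after verifying that such removal leaves $\pr_1$ --- and the target quotient $\pr_3$ --- unchanged. This bookkeeping is the part of the write-up I would be most careful with; with the lemma secured, the $+$-node case and hence the invariant go through, and the theorem follows as described in the first paragraph.
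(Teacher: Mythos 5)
Your argument takes a genuinely different route from the paper's. The paper proves the theorem globally via complete subcircuits: by determinism, each $\x$ with nonzero probability has a unique complete $\x$-subcircuit with nonzero coefficient; that coefficient is $\pr_1(\x)$ under $\bTheta_1$ and $\pr_2(\x)$ under $\bTheta_2$; and dividing parameters divides the coefficient of that single subcircuit, giving $\pr_1(\x)/\pr_2(\x)$ at the root. You instead run a bottom-up induction on the node-wise invariant $\val_3(n)=\val_1(n)/\val_2(n)$, using determinism locally at each $+$-node to make division commute with the sum. Your version is more local and, to its credit, makes explicit the one point the paper's proof passes over silently: one must know that the \emph{same} child (for the paper, the same complete subcircuit) is the nonzero one under $\bTheta_1$ and under $\bTheta_2$; otherwise a term of the form $(\text{positive})/0$ appears and the identity fails.

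The device you use to close that point, however --- the support-propagation lemma ``$\val_1(n)>0$ implies $\val_2(n)>0$ for every node and every $\x$'' --- does not follow from the theorem's hypothesis, which constrains supports only at the root, and it is false in general. Take $\AC=\lambda_x(\theta_a+\theta_b)+\lambda_{\bar x}\,\theta_c$ with $(\theta_a,\theta_b,\theta_c)=(0.5,\,0,\,0.5)$ under $\bTheta_1$ and $(0,\,0.5,\,0.5)$ under $\bTheta_2$: both parametrizations are deterministic, decomposable and smooth and compute the same distribution, so the root-level support hypothesis holds, yet $\val_1(\theta_a)>0=\val_2(\theta_a)$, $\bTheta_3$ contains $0.5/0$, and the circuit under $\bTheta_3$ does not compute $\pr_3\equiv 1$. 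So this is not mere bookkeeping that can be deferred: the lemma as stated cannot be proved, and the theorem itself needs a stronger hypothesis --- e.g., parameter-wise support containment ($\theta_2=0\Rightarrow\theta_1=0$ for corresponding parameters, which does hold in the R-MAP application since there $\theta_1=\pr(\cdot,\e_1\e_2)\le\pr(\cdot,\e_2)=\theta_2$), or $+$-node branch selection governed by indicators as in decision-ACs, so that the active subcircuit is fixed by $\x$ alone. You clearly sensed this in your closing caveat about identically-zero subcircuits, and to be fair the paper's own proof quietly assumes the same coincidence of nonzero branches; but as written your proof has an unclosed (and unclosable) step. With the strengthened hypothesis your induction goes through and is arguably easier to verify than the subcircuit argument.
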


Hence, we can divide two distributions $\pr_1$ and $\pr_2$ induced by the same AC by simply dividing the corresponding parameters in the ACs that lead to  $\pr_1$ and $\pr_2$. Decomposability and smoothness 
are not enough though, we also need determinism.
To prove Theorem~\ref{thm:circuit-div}, we need the notion of a subcircuit which is introduced in \cite{ChanD06} and studied extensively in~\cite{ChoiDarwiche17}.

\begin{definition}
    Let $\AC(\X)$ be a decomposable and smooth circuit. A \underline{complete subcircuit} $\alpha$ of $\AC$ is obtained by visiting the circuit nodes top-down starting at the root: if a $*$-node is visited, visit all its children, and if $+$-node is visited, visit exactly one of its children. The \underline{term} of $\alpha$ is the set of variable values appearing in indicators of $\alpha$, and the \underline{coefficient} of $\alpha$ is the product of all parameters in $\alpha$. 
    \label{def:subcircuit}
\end{definition}

A complete subcircuit must include exactly one indicator for every variable in $\X$. Hence, the term of each complete subcircuit corresponds to an instantiation $\x$ of $\X$, so the subcircuit is called an $\x$-subcircuit. Evaluating $\AC(\x)$ amounts to summing the coefficients of all $\x$-subcircuits. Furthermore, \cite{ChoiDarwiche17} showed that if the AC is also deterministic, then for any instantiation $\x$ of $\X$, if $\AC(\x)\neq 0$, there is a unique $\x$-subcircuit whose coefficient is nonzero. Denote this subcircuit by $\alpha_\x$. We know $\alpha_\x$ has coefficient $p_1 = \pr_1(\x)$ under $\bTheta_1$ and coefficient $p_2 = \pr_2(\x)$ under $\bTheta_2$. By construction, $\alpha_\x$ must have coefficient $p_3 = p_1 / p_2$ under $\bTheta_3$. Hence, $\AC(\x) = p_3 = \pr_3(\x)$ under $\bTheta_3$. This proves Theorem~\ref{thm:circuit-div}.

Without determinism, an AC may have multiple $\x$-subcircuits for a given $\x$, and dividing corresponding parameters may not give the correct result. For a simple example, consider $\AC(X) = a \cdot\lambda_x + b \cdot \lambda_{\bar{x}} + c \cdot \lambda_x$. $\AC$ is not deterministic; it has two complete subcircuits $(x, a)$ and $(x, c)$ for $x$. We have $\AC(x) = a_1 + b_1$ under $\bTheta_1$ and $\AC(x) = a_2 + b_2$ under $\bTheta_2$. After dividing corresponding parameters, $\AC(x)$ produces $a_1 / a_2 + b_1 / b_2$  instead of $(a_1 + b_1) / (a_2 + b_2)$.

\begin{table*}[tb]
\centering
\small
\begin{tabular}{|c|c|c||c|c|c|c||c|c|c|}
\hline
\multicolumn{3}{|c||}{SCM} & \multicolumn{4}{|c||}{VE}       & \multicolumn{3}{|c|}{ACE} \\
\hline
n  &  n'    & $|\U|$   & done  & time (s) & ve\_size   & tw  & done   & time (s)     & ac\_size  \\
\hline
10 & 18.9      & 3.0  & 25   & 0.49  & 4.92e5 & 12.28   &25  & 2.09   & 1595   \\
\hline
15 & 28.4      & 3.4  & 25   & 1.42  & 9.15e6 & 16.88  & 25 & 3.31   & 4990    \\
\hline
20 & 36.8      & 4.0  & 25   & 4.72  & 1.02e8 & 20.12 & 25 & 4.68   & 1.03e4    \\
\hline
25 & 45.5      & 5.3  & 25   & 45.90 & 2.42e9 & 24.16 & 25 & 5.76    & 3.90e4  \\
\hline
30 & 54.5      & 5.8  & 22   &145.72 & 8.07e9 & 27.36    & 25   & 13.62 & 1.12e5 \\
\hline
35 & 63.7      & 7.0  & 7    & 274.73 & 1.05e10 & 32.6   & 24   & 56.33 & 3.57e5  \\
\hline
40 & 74.0      & 10.1 & 0    & /     &  /     & 40.0   & 20   & 131.78  & 3.16e6  \\
\hline
\end{tabular}
\caption{ACE\_RMAP v.s. VE\_RMAP. Here, $n$ is \#nodes we start with to sample SCMs and $n'$ is \#nodes in the sampled SCM eventually. The ``done'' column is the number of solved instances out of $25$. Each data point is an average over solved instances.}
\label{tab:benefit-function-exp}
\end{table*}

\paragraph{R-MAP} We now introduce our treatment for solving R-MAP using ACs based on the techniques for MAP and division on ACs. Let $\U$ be the target variables, $\V$ be the non-target variables, and $\E_1,\E_2$ be evidence variables. Given a $\U$-deterministic $\AC(\X)$ that represents distribution $\pr(\X)$, we can compute the R-MAP probability $\rmap(\U, \e_1, \e_2)$ by running a two-pass traversal on the AC, similar to how variable elimination is extended to R-MAP.

Let $\AC_\U$ be the set of nodes in $\AC$ that depend on $\U$ (i.e., $\vars(n) \cap \U \neq \emptyset$), and $\AC_\V$ be the set of nodes that do not (i.e., $\vars(n) \cap \U = \emptyset$). In the first pass, we evaluate nodes in $\AC_\V$ bottom-up under input $\e_1\e_2$. This leads to a projected circuit $\AC_\U(\U)$ with parametrization $\bTheta_1$, which computes $\pr(\U, \e_1\e_2)$ and is deterministic. In the second pass, we evaluate $\AC_\V$ bottom-up under input $\e_2$. This leads to $\AC_\U(\U)$ with parametrization $\bTheta_2$, which computes $\pr(\U, e_2)$ and is deterministic. What we need is $\pr(\e_1\cond\U, \e_2) = \pr(\U, \e_1\e_2) / \pr(\U, \e_2)$. By Theorem~\ref{thm:circuit-div}, this is simply done by dividing corresponding parameters in $\bTheta_1$ and $\bTheta_2$. As a result, $\AC_\U(\U)$ with parametrization $\bTheta_3$ must compute $\pr(\e_1\cond\U, \e_2)$ and remain deterministic. We finally evaluate $\AC_\U$ bottom-up while setting all indicates to $1$ and replacing every $+$-node in $\AC_\U$ with a $\max$-node. The R-MAP probability is returned at the root. 
\shrink{
The pseudocode for this procedure is provided in Algorithm ???.

\[
    \val(n) = \begin{cases}
        (\one{x \sim \e_1\e_2}, \one{x \sim \e_2}) & \text{if $n$ is an indicator $\lambda_x$} \\
        (p, p) & \text{if $n$ is a parameter $p$} \\
        \hat{\prod}_{c \in \child(n)} \val(c) & \text{if $n$ is a product node} \\
        \hat{\sum}_{c \in \child(n)} \val(c) & \text{if $n$ is a sum node} \\
    \end{cases}
\]
}
We now have the following result which follows from the above discussion.
\begin{theorem}
    Consider a decomposable and smooth $\AC(\X)$ and let $\U \subseteq \X$. If $\AC$ is $\U$-deterministic, then it supports R-MAP with target variables $\U$ in time linear in the AC size.
    \label{thm:rmap}
\end{theorem}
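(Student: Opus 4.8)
The plan is to verify rigorously the three-pass construction already sketched in the ``R-MAP'' paragraph, reducing the claim to Theorem~\ref{thm:circuit-div} together with the fact (established in the discussion preceding Definition~\ref{def:mar-determinism}, and going back to~\cite{ChoiDarwiche17,ChanD06}) that projecting a $\U$-deterministic AC on $\U$ yields a deterministic circuit on which MPE is computed by turning $+$-nodes into $\max$-nodes.

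First I would fix the structural object. Partition the nodes of $\AC$ into $\AC_\U=\{n : \vars(n)\cap\U\neq\emptyset\}$ and $\AC_\V$ (the rest); since $\U,\E_1,\E_2$ are disjoint we have $\E_1,\E_2\subseteq\V$, so every indicator touched by the evidence lies in $\AC_\V$, while each indicator $\lambda_u$ with $u\in\U$ lies in $\AC_\U$. I would observe that $\AC_\U$ is ``upward closed'': a child $c$ of a $+$-node $n\in\AC_\U$ satisfies $\vars(c)=\vars(n)$ by smoothness, hence $c\in\AC_\U$; a $*$-node in $\AC_\U$ may have children in either part. Consequently, replacing every $\AC_\V$ node by the constant it evaluates to under a given assignment of the $\V$-indicators produces a circuit $\AC_\U(\U)$ whose \emph{structure} --- its DAG of $+$/$*$-nodes, its $\U$-indicators, and which constant leaves feed which nodes --- does not depend on that assignment; only the values of the constant leaves do. I would then check that $\AC_\U(\U)$ is decomposable and smooth (both are inherited, since intersecting each $\vars$-set with $\U$ preserves disjointness of $*$-children and equality of $+$-children), and deterministic: forcing some $\V$-indicators to $0$ only kills children, and $\U$-determinism of $\AC$ says no $+$-node depending on $\U$ has two non-zero children at any $\u$, which is exactly determinism of $\AC_\U(\U)$. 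Crucially this holds under \emph{both} evidence assignments, $\e_1\e_2$ and $\e_2$.

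Next I would run the two evaluation passes over $\AC_\V$. Pass~1 evaluates every $\AC_\V$ node bottom-up with the $\V$-indicators set consistently with $\e_1\e_2$; by decomposability and smoothness this makes $\AC_\U(\U)$ compute the factor $f_1(\U)=\pr(\U,\e_1\e_2)$ under a parametrization $\bTheta_1$. Pass~2 does the same with $\e_2$, giving $\bTheta_2$ and $f_2(\U)=\pr(\U,\e_2)$. Since $\e_1\e_2\models\e_2$ we have $f_1(\u)\le f_2(\u)$ pointwise, so $f_2$ dominates the support of $f_1$ and the divisibility hypothesis of Theorem~\ref{thm:circuit-div} is met. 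Applying that theorem to $\AC_\U(\U)$ --- deterministic, decomposable and smooth under $\bTheta_1$ and under $\bTheta_2$ --- the parametrization $\bTheta_3$ obtained by dividing corresponding parameters makes $\AC_\U(\U)$ compute $f_3(\U)=f_1(\U)/f_2(\U)=\pr(\e_1\mid\U,\e_2)$ while remaining deterministic. The final pass evaluates $\AC_\U(\U)$ under $\bTheta_3$ with all $\U$-indicators set to $1$ and every $+$-node replaced by $\max$; because a deterministic, decomposable, smooth circuit with $+$-nodes turned into $\max$-nodes computes the MPE value (Section~\ref{sec:basics}), the root returns $\max_{\u}\pr(\e_1\mid\u,\e_2)$, i.e.\ the R-MAP probability, and the maximizing $\u$ is recovered by the usual downward trace through the chosen $\max$-branches. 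Each pass is a single traversal of (part of) $\AC$ and forming $\bTheta_3$ costs one division per parameter, so the procedure runs in $O(|\AC|)$.

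The main obstacle I anticipate is not the division step --- Theorem~\ref{thm:circuit-div} hands that to us --- but the bookkeeping that makes the division \emph{applicable}: one must argue carefully that $\AC_\U(\U)$ is literally the same circuit in Pass~1 and Pass~2 (so ``corresponding parameters'' is well defined), that its determinism survives the introduction of evidence and not merely pure marginalization (the case treated before Definition~\ref{def:mar-determinism}), and that replacing \emph{all} $+$-nodes of $\AC_\U$ by $\max$-nodes is legitimate --- which is fine precisely because after projection the only remaining $+$-nodes are those that depended on $\U$. A secondary subtlety is the degenerate case $\pr(\u,\e_2)=0$, covered by the convention $0/0\defeq 0$ from Theorem~\ref{thm:circuit-div}: such a $\u$ contributes value $0$ and is harmless unless the R-MAP value itself is $0$.
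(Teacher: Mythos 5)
Your proposal follows exactly the paper's own argument: the theorem is stated as ``following from the above discussion,'' namely the two-pass evaluation of $\AC_\V$ under $\e_1\e_2$ and $\e_2$, division of the corresponding parametrizations via Theorem~\ref{thm:circuit-div}, and an MPE pass on the projected deterministic circuit $\AC_\U(\U)$ with $+$-nodes replaced by $\max$-nodes. Your write-up is correct and in fact supplies several details the paper leaves implicit (well-definedness of the shared structure of $\AC_\U(\U)$ across the two passes, preservation of determinism under evidence rather than pure marginalization, and the support condition $\pr(\u,\e_1\e_2)\le\pr(\u,\e_2)$ needed for the division), but the route is the same.
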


\section{Empirical Results}
We next apply the AC algorithm for R-MAP to solving unit selection problems on random SCMs, and compare its time and complexity to the VE algorithm for R-MAP --- VE\_RMAP in~\cite{huang2023algorithm}. Our algorithm, ACE\_RMAP, is implemented on top of the {\tt ACE} system\footnote{\url{http://reasoning.cs.ucla.edu/ace/}} which we use to compile the objective model into a decision-AC, and then evaluate the AC by the two-pass traversal as described in the previous section. The task is to find optimal units for the benefit function shown in Equation~\ref{eqn:benefit-ang}. 
We next describe the procedure used to generate problem instances.

\paragraph{Problem Generation} We generate random SCM $G$ according to the method in ~\cite{han2022on} and carefully choose unit variables $\U$, outcome variable $Y$, and action variable $X$ of the benefit function $L$ such that $\U, X, Y$ respect meaningful causal relationships. We first generate a DAG $G_0$ with $n$ binary nodes and a maximum number of parents $p$. We then convert $G_0$ into SCM $G$ by adding a unique root parent for each internal node in $G_0$. The resulting $G$ tends to have many roots, which is meant to mimic the structure of SCMs commonly used for counterfactual reasoning. We randomly choose a leaf as outcome $Y$ and an ancestor (cause) of $Y$ as action $X$.
We choose unit variables $\U$ by randomly picking 
50\% of the variables satisfying the following constraints: a unit variable $U$ must be a common cause (ancestor) of $X, Y$ and $U$ remains a cause (ancestor) of $Y$ after removing all edges incoming into $X$.

For each problem instance $\langle G, \U, X, Y \rangle$, we construct an objective model $G'$ by composing triplet-models $G^1,\ldots, G^4,$ one for each component in the benefit function, as discussed earlier; see \cite{huang2023algorithm} for more details. We then run ACE\_RMAP on $G'$ to find the optimal units and compare its results against VE\_RMAP implemented in {\tt NumPy}. The computation for each instance is given 10 minutes to complete. For each problem instance, we report the execution time of ACE\_RMAP and VE\_RMAP. We also report the size of the circuit generated by ACE (ac\_size), and the total size of all factors generated by VE  (ve\_size), which are two comparable parameters that measure the total number of arithmetic operations required by ACE\_RMAP and VE\_RMAP, respectively. We also report the approximate constrained treewidth (tw) of $G'$, which is computed using minfill heuristics~\cite{kjaerulff1990triangulation} to find a constrained elimination order for VE\_RMAP; see~\cite{huang2023algorithm} for the need of a constrained order. 

\paragraph{Results} We used $n \in \{10, 15, 20, 25, 30, 35, 40\}$ and $p=6$ which results in SCM with $n'\simeq2n$ nodes and objective model with $\leq 12n$ nodes. For each $n$, we generate 25 instances and report the average statistics in Table~\ref{tab:benefit-function-exp}. We highlight the patterns from the statistics. First, as $n$ increases, the time and the number operations of  VE\_RMAP grows exponentially and becomes impractical after $n > 30$ (running out of memory\footnote{NumPy does not support ndarrays with $>32$ dimensions.}). This is predicted as VE\_RMAP is purely structure-based and must be exponential in the constrained treewidth (tw). Second, ACE\_RMAP is much more efficient than VE\_RMAP, leading to orders-of-magnitude speedups as a result of exploiting the high-degree of local (parameteric) structure in the objective model (e.g., 0/1 parameters, context-specific independence, parameter equality). This enables ACE\_RMAP to support very large and dense models (with $tw > 30$) that are normally out-of-reach if such parametric structure is not exploited.

\section{Conclusion}
Our main contribution in this paper is a result which shows how we can solve the Reverse-MAP (R-MAP) problem efficiently using ACs. R-MAP is a variant on the classical MAP problem which arises in causal reasoning, particularly the unit selection problem. We evaluated our AC-based method for solving R-MAP (and the unit selection problem) against the state-of-the-art method based on variable elimination, showing orders-of-magnitude speedups. This is a substantial step forward in making causal reasoning more scalable.

\paragraph{Acknowledgement} This work has been partially supported by ONR grant N000142212501.

\bibliography{bib/reference, bib/references, bib/references2, bib/refs}
\bibliographystyle{flairs}

\end{document}